\pgfplotsset{compat=1.7}
\newtheorem{proposition}{Proposition}
\newcommand{\bluetext}[1]{\textcolor{blue}{#1}}
\newcommand{\redtext}[1]{\textcolor{red}{#1}}
\DeclareMathOperator*{\argmax}{argmax}
\def\Hy@xspace@end{}\makeatother
\newcommand{\cC}{\mathcal{C}} 
\newcommand{\cG}{\mathcal{G}} 
\newcommand{\cM}{\mathcal{M}} 
\newcommand{\cO}{\mathcal{O}} \newcommand{\cP}{\mathcal{P}}
 \newcommand{\cS}{\mathcal{S}}
 \newcommand{\cX}{\mathcal{X}}
\DeclareMathOperator{\proj}{proj}
\title{End-to-End Optimization and Learning of Fair Court Schedules}
\author{
My H.~Dinh \\
	University of Virginia\\
	Charlottesville, VA, USA\\
	\texttt{fqw2tz@virginia.edu}\\
	\And
    James Kotary \\
	University of Virginia\\
	Charlottesville, VA, USA\\
	\texttt{jk4pn@virginia.edu} \\
     \And
 Lauryn P. Gouldin \\
  Syracuse University \\
  Syracuse, NY 13210 \\
  \texttt{lgouldin@syr.edu} \\  
  \And
 William Yeoh \\
 Washington University in St. Louis \\
 Saint Louis, MO 63130 \\
  \texttt{wyeoh@wustl.edu} \\
  \And
  Ferdinando Fioretto \\
	University of Virginia\\
	Charlottesville, VA, USA\\
	\texttt{fioretto@virginia.edu} \\
}
\begin{document}
\maketitle
\begin{abstract}
Criminal courts across the United States handle millions of cases every year, and the scheduling of those cases must accommodate a diverse set of constraints, including the preferences and availability of courts, prosecutors, and defense teams. When criminal court schedules are formed, defendants' scheduling preferences often take the least priority, although defendants may face significant consequences (including arrest or detention) for missed court dates. Additionally, studies indicate that defendants' nonappearances impose costs on the courts and other system stakeholders. To address these issues, courts and commentators have begun to recognize that pretrial outcomes for defendants and for the system would be improved with greater attention to court processes, including \emph{court scheduling practices}. There is thus a need for fair criminal court pretrial scheduling systems that account for defendants' preferences and availability, but the collection of such data poses logistical challenges. Furthermore, optimizing schedules fairly across various parties' preferences is a complex optimization problem, even when such data is available. In an effort to construct such a fair scheduling system under data uncertainty, this paper proposes a joint optimization and learning framework that combines machine learning models trained end-to-end with efficient matching algorithms. This framework aims to produce court scheduling schedules that optimize a principled measure of fairness, balancing the availability and preferences of all parties. 
\end{abstract}


\section{Introduction}
\label{sec:introduction}
Criminal courts across the United States handle millions of cases every year, and scheduling these cases must accommodate a diverse set of constraints, including the preferences and availability of courts, prosecutors, and defense teams. Typically, defendants’ scheduling preferences receive the least priority, despite the significant consequences they may face---such as arrest, detention, or other penalties---for missed court dates.

Ensuring that defendants return to court for trial and all pretrial appearances has shaped pretrial decision-making for centuries. Pretrial appearance rates vary across jurisdictions, but studies suggest that most defendants appear for their pretrial hearings as required \cite{gouldin2024keeping, cjil2022northcarolina, reaves2013felony}. Nevertheless, criminal defendants still fail to appear for court in significant numbers. These nonappearances impose costs on the courts and other system stakeholders. Demographic factors like race, gender, or age have been shown in prior studies to impact nonappearance rates, but those impacts are not consistent across studies \cite{zettler2015exploratory}. The factor most closely and consistently associated with nonappearance--across other demographic factors--is indigence \cite{zettler2015exploratory}. 

In the last decade, bail reform efforts across the country have led to higher rates of pretrial release and increased focus on maintaining or even improving defendants’ pretrial appearance rates. Judges and court administrators are concentrating more on strategies to ensure that defendants return for their court appearances. Many of these reforms aim to change defendants’ behavior, but courts and commentators are beginning to recognize that pretrial outcomes for both defendants and the system could be improved by giving greater attention to court expectations and processes. To address pretrial appearance issues, courts must investigate institutional reforms, including changes to how pretrial court appearances are scheduled.

\begin{figure*}[t]
\center
\includegraphics[trim={10em 30em 10em 25em}, clip, width=1.0\columnwidth]{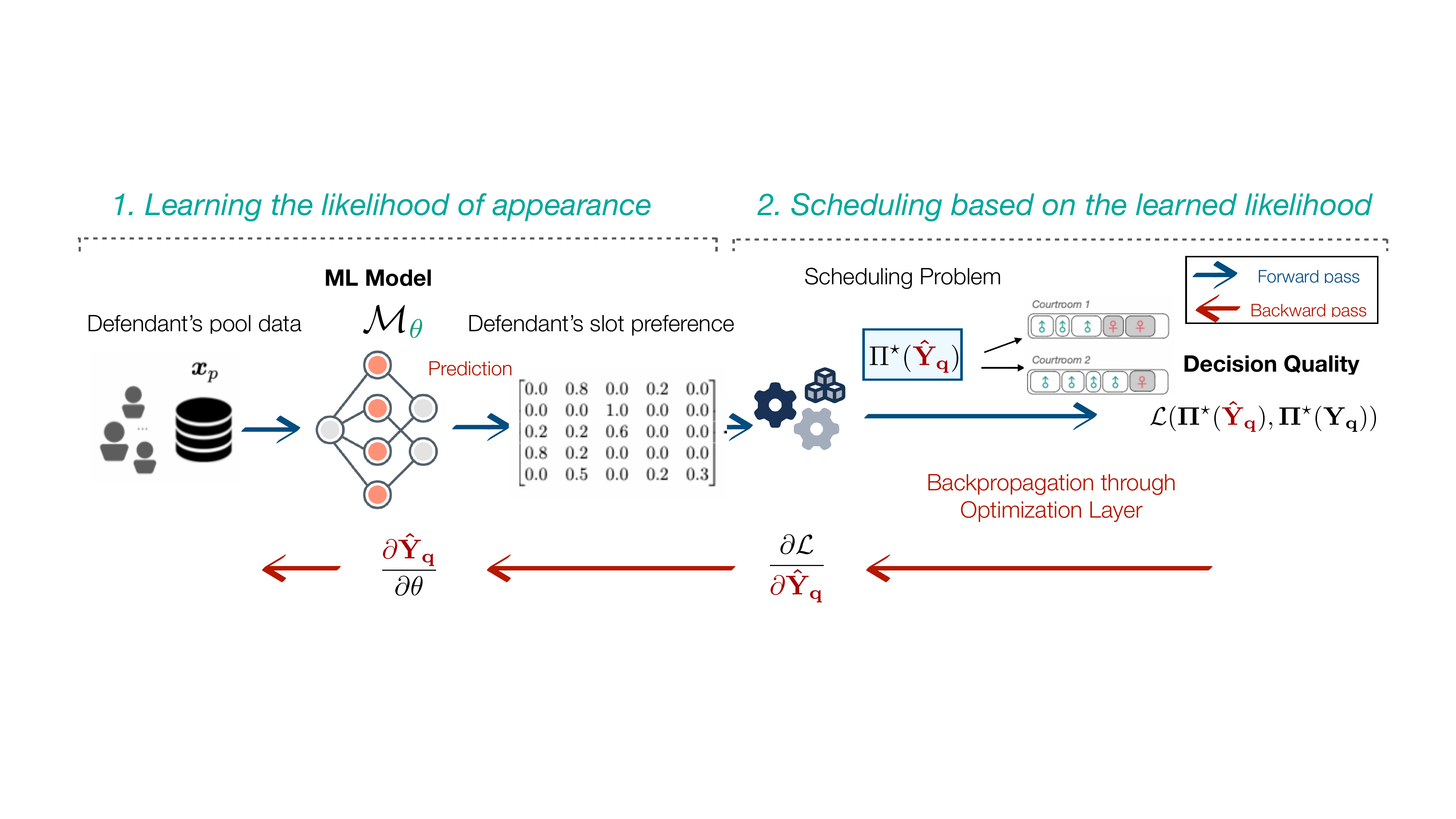}
\caption{Proposed end-to-end framework for learning to schedule. Given candidates' socioeconomic and demographic identifiers, a neural network is trained to predict their preference score for each time slot. A differentiable surrogate model uses a predicted score to attain assignments and decision quality loss. }
\label{fig:court_scheme}
\end{figure*}

Across the system, there is increasing awareness of the conflict between many courts' current inflexible pretrial scheduling processes and the particular challenges that face defendants who have insecure employment situations, who have care-giving responsibilities, or who lack transportation to court \cite{ideas42_2023}. These challenges are disproportionately borne by poor and minority defendants, leading to disparate impacts with profound negative effects.

Additionally, nonappearance imposes high costs on the judiciary as an institution. Rescheduling court dates imposes costs on judges, attorneys, court personnel, witnesses, and other defendants whose cases may be delayed as a result of court calendar congestion. Pretrial scheduling has thus a significant societal and economic impact on the involved individuals and for society at large.

There is a need for fair criminal court pretrial scheduling systems that account for defendants’ preferences and availability, but collecting such data is logistically challenging. Furthermore, fairly optimizing a schedule across various parties’ preferences is a complex optimization problem, even when the necessary data is available. This paper proposes a joint optimization and learning framework that combines machine learning models trained end-to-end with efficient matching algorithms. This framework aims to produce court scheduling policies that optimize a principled measure of fairness, balancing the availability and preferences of all involved parties.

Designing court schedules that account for the diverse constraints of all parties involved presents several challenges. While it is feasible to gather availability preferences from judges and attorneys, the collection of such data from defendants is often hindered by logistical and financial barriers, as well as inequities in access to communication technologies (e.g., email, phones). As a result, court scheduling systems must operate under conditions of \emph{uncertainty}. Furthermore, even when availability data is accessible, there is a lack of formal mechanisms to ensure that scheduling processes adhere to principles of procedural fairness. Current court scheduling systems do not adequately incorporate fairness considerations, which are essential for promoting justice. Moreover, the development of fairness-aware scheduling models is inherently challenging, requiring the integration of complex constraints into machine learning models. Compounding this difficulty is the absence of comprehensive datasets needed to conduct and validate such studies.

To address these challenges, there is a clear need for a system that reduces nonappearance while ensuring fairness for all defendants.  However, addressing these issues is far from trivial. Scheduling is fundamentally a decision-making process, typically framed as a utility maximization problem. The goal is to assign people to time slots in a way that fairly maximizes overall utility. In court scheduling, this utility can be thought of as a function of an individual's preference, even though such data may be unavailable or uncertain. The task then becomes learning how to map defendants' socioeconomic and demographic characteristics to court schedules that meet fairness criteria. In this paper, we propose a novel framework that combines combinatorial optimization with deep learning to compose an efficient court scheduling system that maximizes a principled notion of fairness under uncertainty in the defendants' scheduling preferences. A schematic illustration of the overall framework is provided in Figure \ref{fig:court_scheme}.

\medskip\noindent{\bf Contributions.}
This paper makes the following contributions:
\begin{enumerate}[left=4pt,topsep=4pt]
    \item It formalizes a fair scheduling problem tailored for pretrial court settings, where data about defendants’ scheduling preferences is often incomplete or uncertain. The problem is formulated as a nonlinear integer program with a piecewise-defined fairness objective, over uncertain values in defendants' scheduling preferences. This formulation addresses the real-world challenges courts face in obtaining reliable preference data from defendants.

    \item It presents a novel framework that integrates machine learning with optimization techniques, allowing to handle uncertainty in defendants' preferences by directly optimizing a decision loss function, thereby improving the robustness and fairness of the scheduling outcomes.

    \item To overcome the computational complexity of the integer programming formulation, it proposes an end-to-end trainable model that predicts feasible schedules via the parameters of an efficient matching algorithm. This enables training on the fairness objective directly, enhancing scalability as well as satisfaction of the fairness objective.

    \item Given the absence of datasets on court scheduling-dependent preferences that include sensitive information such as gender, race, and childcare responsibilities, it introduces a process for generating realistic synthetic datasets for training. This process builds from available information about the demographic composition of arrestee populations and layers in additional estimates of the numbers of defendants in the pretrial population who face particular types of barriers to timely court appearances. This enables the training and validation of our fairness-aware scheduling models, ensuring that they can account for diverse and sensitive demographic factors. 

    
\end{enumerate}

\section{Related Work}
\label{sec:related_work}

\paragraph{Court scheduling.}
Court scheduling refers to the process of organizing the court’s calendar and scheduling cases at the appropriate time, location, and format for all necessary participants. It is a fundamental function of the judicial system and effective court scheduling takes into account available resources, participant schedules, and due process requirements \cite{lane1976guide, graef2023systemic}. Despite its critical role, however, court scheduling remains largely understudied \cite{gouldin2024keeping}. Most existing research focuses on the efficacy of court reminder systems or primarily aims to improve pretrial outcomes for individual defendants \cite{ideas42_2023, cjil2022northcarolina}.

This work addresses these gaps by evaluating the current processes used by court personnel to establish court schedules and proposing data-driven improvements. Numerous factors create barriers to innovation in this context: Courts adhere to tradition and to formal, hierarchical structures that hinder efforts to streamline court scheduling \cite{inslaw1988decision, ferguson2022courts}. Additionally, due process requirements and the adversarial nature of the criminal process pose significant barriers to change. The inherent complexity of the judicial process itself is a considerable hurdle to innovation \cite{inslaw1988decision}. Effective scheduling must account for multiple factors, including the availability of judges, defendants, attorneys, and courtrooms. The complex landscape leads to hesitancy among administrators, who may prefer maintaining the status quo over implementing potentially disruptive changes. Consequentially, the court system is marked by significant inefficiencies, causing delays and backlogs that can affect the delivery of justice. Moreover, the lack of data regarding court scheduling practices poses a major hurdle to reform efforts. Addressing these issues is crucial for developing a more efficient and effective court scheduling system.  

\paragraph{Advances in Machine Learning and Optimization for scheduling.}
Scheduling is fundamentally a combinatorial optimization problem. In the context of court scheduling, where defendants’ preferences are unknown a priori but essential for the optimization process, the scheduling problem typically relies on predictions from a machine learning model. This model forecasts defendants’ scheduling preferences based on predefined features such as the defendant’s job, access to transportation, and childcare obligations. This approach falls under the general area of constrained optimization informed by machine learning predictions, commonly referred to as a two-stage process or predictive and prescriptive processes.
However, this two-stage approach has been shown to lead to suboptimal outcomes. The primary issue arises from the misalignment between prediction errors and the final task utilities, which, in the case of this paper, account for the effectiveness and fairness of the court schedule. Since the prediction model is trained independently of the optimization task, inaccuracies in predicting defendants’ preferences can directly impact the quality of the resulting schedule. This separation means that the optimization process does not account for how prediction errors might affect the overall utility, leading to schedules that may not optimally balance fairness and efficiency.

Recent literature has sought to address these limitations by developing constrained optimization models that are trained end-to-end with machine learning models \cite{kotary2021end}. In the Predict-Then-Optimize setting, a machine learning model predicts the unknown coefficients of an optimization problem. Then, backpropagation through the optimal solution of the resulting problem allows for end-to-end training of its objective value, under ground-truth coefficients, as a loss function. The primary challenge is backpropagation through the optimization model, for which a variety of alternative techniques have been proposed. Differentiation through constrained argmin problems in the context of machine learning was discussed as early as \cite{gould2016differentiating}, who proposed first to implicitly differentiate the argmin of a smooth, unconstrained convex function by its first-order optimality conditions, defined when the gradient of the objective function equals zero. This technique is then extended to find approximate derivatives for constrained problems, by applying it to their unconstrained log-barrier approximations. Subsequent approaches applied implicit differentiation to the KKT optimality conditions of constrained problems directly \cite{amos2019optnet,amos2019limited}, but only on special problem classes such as  Quadratic Programs. \cite{konishi2021end} extend the method of \cite{amos2019optnet}, by modeling second-order derivatives of the optimization for training with gradient boosting methods. \cite{donti2017task} uses the differentiable quadratic programming solver of \cite{amos2019optnet} to approximately differentiate general convex programs through quadratic surrogate problems.   Other problem-specific approaches to analytical differentiation models include ones for sorting and ranking \cite{blondel2020fast}, linear programming \cite{mandi2020interior}, and convex cone programming \cite{agrawal2019differentiating}.

A related line of work concerns end-to-end learning with \emph{discrete} optimization problems, including linear, mixed-integer, and constraint programs. These problem classes often define discontinuous mappings with respect to their input parameters, making their true gradients unhelpful as descent directions in optimization. Accurate end-to-end training can be achieved by \emph{smoothing} the optimization mappings, to produce approximations that yield more useful gradients. A common approach is to augment the objective function with smooth regularizing terms such as Euclidean norm or entropy functions \cite{wilder2018melding,ferber2020mipaal,mandi2020interior}. Others show that similar effects can be produced by applying random noise to the objective \cite{berthet2020learning,paulus2020gradient}, or through finite difference approximations \cite{poganvcic2019differentiation,sekhar2022gradient}. This enables end-to-end learning with discrete structures such as constrained ranking policies \cite{kotary2022end}, shortest paths in graphs \cite{elmachtoub2020smart}, and various decision models \cite{wilder2019melding}.

This paper builds on the framework of end-to-end learning with discrete optimization problems, but unlike previous proposals, introduces a novel process for fair optimization based on ordered weighted average (OWA) operators. These operators enable the achievement of key fairness desiderata, including impartiality, equitability, and Pareto efficiency, which are essential for the court scheduling application examined. 

\section{Motivations and Problem Setting}
\label{sec:motivations}

Scheduling defendants in pretrial processes involves arranging their court appointments in a way that aligns with their preferences for various appointment times. In this context, a defendant’s preference indicates their likelihood of attending a scheduled appointment. A significant challenge is that these preference data are often unavailable, necessitating estimation from available data to create meaningful schedules. This challenge can be partially addressed by employing a pipeline where a machine learning (ML) model predicts defendants’ preferences, which are then used as inputs for the scheduling task, as illustrated in Figure \ref{fig:court_scheme}.

A common scheme in data-driven decision processes is to treat the learning and optimization components of this pipeline separately, where an ML model is first trained via a measure of predictive accuracy, and then its predictions are used to inform a decision-making task. 
Such 'two-stage' approaches are justified when the predictive models are near-perfect since accurate predictions tend to inform the correct decisions. However, in practice, predictive models are rarely perfect, especially when preference data are limited or incomplete. The resulting prediction errors can lead to suboptimal scheduling outcomes, where the generated schedules fail to fully align with the defendants’ actual preferences. Moreover, these prediction inaccuracies can exacerbate \emph{unfairness} in the scheduling process, as errors may disproportionately affect certain groups of defendants, leading to biased or inequitable scheduling outcomes \cite{kotary2021end}.
These effects are significant and, if not mitigated, can lead to a ``poor get poorer'' effect with significant societal consequences.  \emph{To mitigate these disparate impacts, this paper proposes an integrated optimization and learning framework for scheduling defendant court visit times to certify a desired fairness requirement.}

\subsection*{Problem Setting}
\label{sec:Problem_Setting}

The setting studied in this paper considers a training dataset $\bm{D} = (\bm{x}_p, \bm{g}_p, \bm{Y}_p)_{p=1}^N$ of $N$ elements, each describing a pool of $n$ defendants to be scheduled on a given day.
For each pool indexed by $p \in [N]$, $\bm{x}_p \in \cX$ describes a list $(x^p_i)_{i=1}^n$ of $n$ defendants to schedule, with each item $x^p_i$ 
defined by a feature vector. 
These feature vectors encode representations of the individuals to schedule, e.g., their socioeconomic identifiers, addresses, and accusations. 
The elements $\bm{g}_p = (g^p_i)_{i=1}^n$ describe protected group attributes of the defendants in some domain $\cG$. 
For example, they may denote employment type, whether the defendant has access to transportation, or whether they have childcare obligations. 
Together, the unprotected and protected attributes ($x_i^p, g_i^p$) provide a description of the defendant's $i$ characteristics in pool $p$. 
Finally, the element $\bf{Y}_p \in \mathcal{Y}  \subseteq \mathbb{R}^{n \times n}$ are supervision labels $(\bm{Y}^p_i)_{i=1}^n$ that associate a vector of non-negative values with each person $i \in [n]$, and describe the preference of individual $i$ with respect to each slot in a possible schedule. 

The goal is to learn a mapping between features $\bm{x}_p$ of a pool $p$ of $n$ defendants and a \emph{schedule} which fairly optimizes the satisfaction of their preferences as indicated by $\bm{Y}_{\bm{p}}$. A schedule is represented by a permutation of the list $[n]$, which determines the appointment times assigned to each defendant. When clear from the context, we may drop the various elements of the superscript or subscript $p$. For modeling purposes, and when clear from context, we represent a schedule $\bm{\Pi}$ by a permutation matrix, so that $\bm{\Pi}_{ij}$ indicates the assignment of individual $i$ to the time slot $j$. Additionally, let $\bm{\Pi}_i$ represent the $i^{\textit{th}}$ row of the matrix $\bm{\Pi}$.

Figure \ref{fig:scheduling-example} illustrates these elements in our problem setting. It shows a matrix  $\bm{\Pi}$ representing a permutation that matches defendants to appointment slots. The highlighted defendant's feature vector $x_4$, is illustrated in light of their respective row $\bm{\Pi}_4 = [1, 0,0,0,0]$ in $\bm{\Pi}$, which indicates that defendant $4$ is to be scheduled first. The contextual information provided by $x_4$ will be used as input to an ML model to produce an estimated preference vector $\bm{y}_4 = [0.8, 0.2, 0, 0, 0]$, which associates the defendant's likelihood of appearance to the different court visit times. The protected group information $\bm{g}$, expressing gender in the example above, is represented by the different colored (white/gray) cells. 

\begin{figure}[t]
    \centering
    \includegraphics[width=0.5\linewidth]{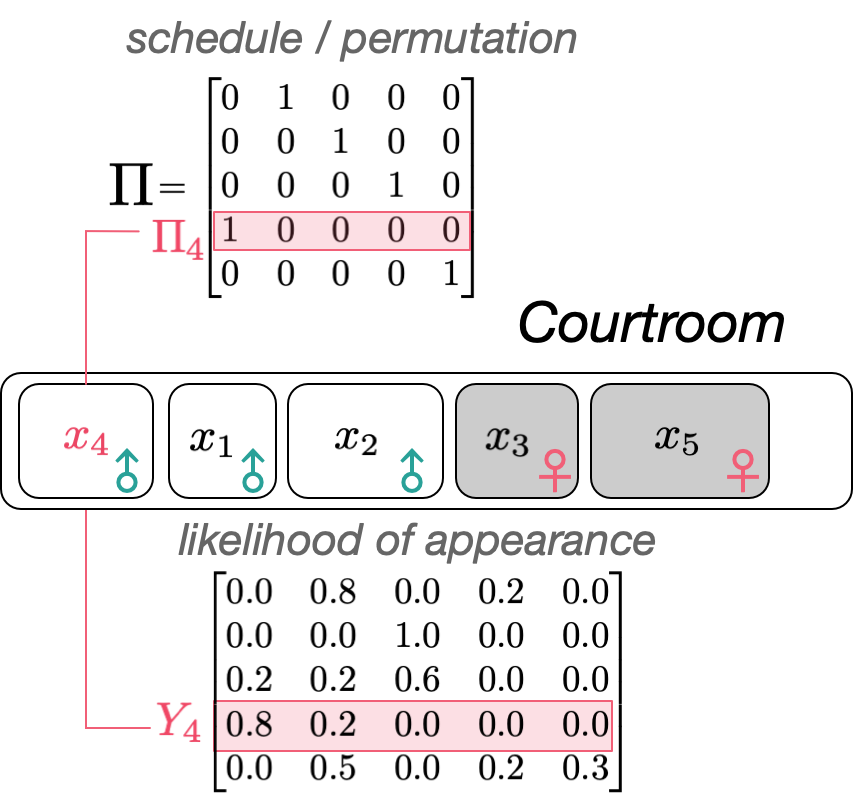}
    \caption{Court scheduling example}
    \label{fig:scheduling-example}
\end{figure}

\medskip
\paragraph{Learning Objective}
The goal in our setting is to predict, for any pool of defendants $(\bf{x},{g}, {Y})$, a permutation matrix $\bm{\Pi}$ representing a schedule which fairly maximizes the \emph{utility} of each defendant, defined as:
\begin{equation}
\label{eq:individual_utility}
    u_i(\bm{\Pi}, \bm{Y}) = Y_i^T \Pi_i,
\end{equation}
for defendant $i$. When $Y_{ij}$ represents the likelihood of defendant $i$ attending appointment $j$, this quantity represents the likelihood of defendant $i$ attending their assigned appointment. Given any $\bm{\Pi}$ and $\bm{Y}$, computing Equation \eqref{eq:individual_utility} for each defendant results in a \emph{utility vector} $\bm{u} \in \mathbb{R}^n$:
\begin{equation}
\label{eq:utility_vector}
    \bm{u}(\bm{\Pi}, \bm{Y}) = \textit{diag}(\bm{Y}^T \bm{\Pi}).  
\end{equation}
In this setting, it is natural to ask: 
\begin{quote}
    What does it mean to fairly optimize the utilities of each defendant, which are inherently independent objectives?     
\end{quote}
Any optimization method requires a single-valued objective function to be defined. For example, one may choose to optimize the total utility of a scheduling policy, defined as the sum of individual utilities:
\begin{equation}
\label{eq:utility}
    \bm{1}^T \bm{u}(\bm{\Pi}, \bm{Y}) =\mathrm{Tr}(\bm{Y}^T \bm{\Pi}), 
\end{equation}
which is a single-valued function of $\bm{u}$. However, a schedule $\bm{\Pi}$ which optimizes total utility does not account for the lowest utilities among all defendants, which may be arbitrarily low. This makes the total utility an undesirable objective in our framework, since \emph{accepting low utilities for some defendants increases their odds of nonappearance}.

Instead we seek an aggregation of individual utilities which, when optimized, raises the utilities of all defendants uniformly, to the extent possible. One such possible objective is to maximize the minimum of all defendants' utilities. However, this leads to \emph{pareto inefficient} solutions, meaning that some defendants' utilities are needlessly suboptimal, in the sense that $\bm{\Pi}$ can be chosen to raise those utilities without harm to the utilities of others \cite{ogryczak2003solving}. This can be viewed as a needless compromise to total utility. {\em The notion of court scheduling studied in this work calls for an aggregation of individual utilities which, when optimized, raises the lowest individual utilities while maintaining pareto efficiency}. 

A well-known aggregation function possessing these properties is the \emph{Ordered Weighted Average} (OWA)~\cite{Yager199380}. The OWA operator and its fairness properties are formally defined next, before delving into the description of the proposed framework for learning schedules which maximize the OWA-aggregated utility, in Section~\ref{sec:learning_with_OWA}.

\section{Preliminaries: Fair OWA Aggregation}
The $\textit{Ordered Weighted Average}$ (OWA) operator \cite{Yager199380} is a class of functions used for aggregating multiple independent values in settings requiring multicriteria evaluation and comparison \cite{10.5555/2464909}. Let $\mathbf{y} \in \mathbb{R}^m$ be a vector of $m$ distinct criteria, and $\tau: \mathbb{R}^m \rightarrow \mathbb{R}^m$ be the sorting map in increasing order so that $\tau_1(\mathbf{y}) \leq \tau_2(\mathbf{y}) \leq \cdots \leq \tau_m(\mathbf{y})$.

Then for any $\mathbf{w}$ satisfying 
$$\left\{ \mathbf{w} \in \mathbb{R}^m \,\vert\, \sum_i w_i = 1, \mathbf{w} \geq 0 \right\},$$ 
the OWA aggregation with weights $\mathbf{w}$ is defined as a linear functional on  $\tau(\mathbf{y})$:
\begin{equation}
\label{eq:OWA_definition}
\textsc{OWA}_{\mathbf{w}}(\mathbf{y}) = \mathbf{w}^T \tau(\mathbf{y}),
\end{equation}
which is piecewise-linear in $\mathbf{y}$ \cite{ogryczak2003solving}. 

This paper focuses on a specific instance of OWA, commonly known as \emph{Fair OWA} \cite{10.1155/2014/612018}, characterized by weights arranged in descending order: $w_1 > w_2 \ldots > w_n > 0$. Note that with monotonic weights, Fair OWA is also concave. Fair OWA objectives are increasingly popular in optimization as fairness gains attention in decision-making processes.

The following three properties of Fair OWA functions are crucial for their use in fairly optimizing multiple objectives:
\begin{enumerate}
    \item \emph{Impartiality} ensures that Fair OWA treats all criteria equally. This means that for any permutation $\sigma \in \cP_{m}$, where $\cP_{m}$ is the set of all permutations of $[1,\ldots,m]$, the OWA aggregation with weights $\mathbf{w}$ yields the same result for any permutation of the input vector $\mathbf{y}$.

    \item \emph{Equitability} guarantees that marginal transfers from a criterion with a higher value to one with a lower value increase the OWA aggregated value. This condition holds that $\textsc{OWA}_{\mathbf{w}}(\mathbf{y}_\epsilon) >  \textsc{OWA}_{\mathbf{w}}(\mathbf{y})$, where $\mathbf{y}_{\epsilon} = \mathbf{y}$ except at positions $i$ and $j$ where $(\mathbf{y}_{\epsilon})_i = \mathbf{y}_i - \epsilon$ and $(\mathbf{y}_{\epsilon})_j = \mathbf{y}_j + \epsilon$, assuming  $y_i > y_j +\epsilon$.  

    \item \emph{Monotonicity} ensures that $\textsc{OWA}_{\mathbf{w}}(\mathbf{y})$ is an increasing function of each element of $\mathbf{y}$. This property implies that solutions optimizing the OWA objectives \eqref{eq:OWA_definition} are Pareto Efficient solutions of the underlying multiobjective problem, thus no single criteria can be raised without reducing another \cite{ogryczak2003solving}. 
\end{enumerate}
This last aspect is crucial in optimization, where Pareto-efficient solutions are preferred over those that do not possess this attribute. Taken together, these properties define a notion of fairness in optimal solutions known as \emph{equitable efficiency} \cite{ogryczak2003solving}, which is of particular interest for the pretrial court scheduling optimization.

Intuitively, OWA objectives lead to fair optimal solutions by always assigning the highest weights of $\mathbf{w}$ to the objective criteria in order of lowest current value. This paper employs fair OWA functions as a fair measure of overall utility with respect to defendants’ preferences, ensuring that the resulting court schedules uphold principles of fairness and equity. As shown in the next section, while achieving these properties is desirable for court schedules, it also introduces a challenging optimization problem over the space of permutation matrices. Addressing such computational challenges is one of the key technical objectives of the paper.

\section{Fair Optimization of Court Schedules }
\label{sec:OWA_optimization}

Using the concepts introduced above, we can form an optimization program which models fair maximization of utilities in court schedules. Suppose a pool of defendants is described by $(\bm{x},\bm{g}, \bm{Y})$, where the preference values $\bm{Y}$ are known. We can model the scheduling matrix $\bm{\Pi}$ which maximizes the OWA-aggregated utility over all defendants, as the solution to an integer program:
\begin{subequations}
    \label{model:OWA_individual}
    \begin{align}
       \bm{\Pi}^{\star}(\bm{Y}) = {\argmax}_{\bm{\Pi}} &\;\; \textsc{OWA}_{\mathbf{w}} \left( \; \bm{u}(\bm{\Pi},\bm{Y}) \;\right) \\
       \mbox{subject to:} & \;\;\;\bm{\Pi} \in \{0,1\}^{n \times n} \label{model:OWA_individual_integers} \\
       & \;\; \sum_{i}  \Pi_{ij} =1,  \forall j \in [n] \label{model:OWA_individual_constraints_i}\\
       & \;\; \sum_{j} \Pi_{ij}=1, \forall i \in [n] \label{model:OWA_individual_constraints_j},
    \end{align}
\end{subequations}
\noindent where the utility vector $\bm{u}(\bm{\Pi},\bm{Y})$ is defined as in equation \eqref{eq:utility_vector}. Here and throughout, we make a standard choice of fair OWA weights $w_j = \frac{n-i+1}{n}$, known as the \emph{Gini indices} \cite{do2022optimizing} Note that the constraints \eqref{model:OWA_individual_constraints_i} and \eqref{model:OWA_individual_constraints_j} hold that each row and column of $\bm{\Pi}$ must sum to $1$. Together with \eqref{model:OWA_individual_integers}, this ensures a single value of $1$ in each row and column, so that problem \eqref{model:OWA_individual} models the permutation matrix with maximal fair OWA-aggregated utility.

\subsection{Group Fairness}
The objective in problem \eqref{model:OWA_individual} is to maximize the OWA-aggregated utility vector with respect to each individual defendant. We may also extend the use of the OWA operator to model different fairness objectives within our framework. In particular, we are interested in \emph{group fairness}, a concept widely employed, for example, in web search rankings \cite{singh2018fairness,singh2019policy,do2022optimizing,zehlike2020reducing,zehlike2017fa}. 

In the notation of Section \ref{sec:motivations}, each pool of defendants is described by data $(\bm{x}_p,\bm{g}_p, \bm{Y}_p)$, where $\bm{g}_p$ indicates the protected group of each defendant in the pool. Individuals may grouped by any category between which fair outcomes are desired: for  example by gender, race, socioeconomic status or intersections thereof. For any schedule $\bm{\Pi}$ we define the \emph{group utility} $u^g$ of group $g$ as the mean utility of all defendants in that group. For any group indicator $g$, let $\cS_{g}$ be the set of defendants' indices belonging to that group. Then 
\begin{equation}
\label{eq:group_utility}
    u^g(\bm{\Pi}, \bm{Y}) = \frac{1}{|\cS_{g}|} \sum_{i \in \cS_{g}} \bm{Y}_i^T \bm{\Pi}_i.  \\
\end{equation}
Let a \emph{partition} $\cG = \{ g_i^p: p \in [N], i \in [n] \}$ represent the set of all unique protected group indicators (e.g. male and female, or the set of all income brackets). Our notion of group fairness is to maximize the OWA aggregation of all group utilities over a chosen partition. Letting $\bm{u}^{\cG}$ be the vector of all group utilities $[u^g: g \in \cG]$,
\begin{subequations}
    \label{model:OWA_group}
    \begin{align}
       \bm{\Pi}^{\star}(\bm{Y}) = {\argmax}_{\bm{\Pi}} &\;\; \textsc{OWA}_{\mathbf{w}} \left( \; \bm{u}^{\cG}(\bm{\Pi},\bm{Y}) \;\right) \label{model:OWA_group_obj} \\
       \mbox{subject to:} & \;\;\;\bm{\Pi} \in \left\{0,1\right\}^{n \times n} \label{model:OWA_group_individual_integers} \\
       & \;\; \sum_{i}  \Pi_{ij} =1, \forall j \in [n] \label{model:OWA_group_individual_constraints_i} \\
       & \;\; \sum_{j} \Pi_{ij}=1, \forall i \in [n]. \label{model:OWA_group_individual_constraints_j}
    \end{align}
\end{subequations}

Of course, the models \eqref{model:OWA_group} and     \eqref{model:OWA_individual} coincide when each individual defendant constitutes their own group. In Section \ref{sec:Experiments}, we will evaluate the ability of our framework to fairly optimize group utilities with respect to various partitions. 

\subsection{Complexity of the Optimization Models}
It is important to remark on the complexity of the integer program  \eqref{model:OWA_group}. Since the OWA function is piecewise linear \cite{ogryczak2003solving}, they can be categorized as \emph{nonlinear integer programs} with a piecewise-defined objective and is thus NP-hard. 
Given the form of this integer program, traditional integer programming approaches cannot be applied directly to  \eqref{model:OWA_group}. 
Specifically, methods such as branch-and-bound and cutting plane algorithms, which are commonly used for solving integer linear programs (ILPs), struggle with the piecewise linear nature of the OWA objective. These ILP approaches typically rely on linearity and convexity to efficiently explore the solution space, but the nonlinear, piecewise-defined objective complicates the feasible region and increases the computational burden. 

Furthermore, optimizing a schedule for $n$ defendants requires $n^2$ integer variables, causing the size of the program to grow quadratically with the size of the scheduling pool. This scalability issue makes exact solutions impractical for large instances due to excessive memory and time requirements. 
Thus, an important aspect of our integrated optimization and learning framework, described in the next section, is to \emph{avoid solving} \eqref{model:OWA_group} directly directly.

\section{Optimization and Learning for Fair Court Schedules}
\label{sec:learning_with_OWA}

Problem \eqref{model:OWA_group} formalizes our fair scheduling problem as an optimization program that depends on unknown preference coefficients $\bm{Y}$. To address the absence of direct preference data, our approach involves learning these unknown preferences from available contextual information (e.g., the defendant features $\bm{x}_p$) using a neural network.  Recall from Section \ref{sec:Problem_Setting} that available training data consist of $\bm{D} = (\bm{x}_p, \bm{g}_p, \bm{Y}_p)_{p=1}^N$, where predictors $\bm{x}_p$ are known but $\bm{Y}_p$ are unknown at test time. 

A straightforward combined prediction and optimization model trains a neural network $\cM_{\theta}$ with weights $\theta$ to predict $\bm{Y}$ from $\bm{x}$, by minimizing the squared residuals of its predictions:
\begin{equation}
    \label{model:twostage}
    \min_{\theta} \frac{1}{N} \sum_p \left\| \cM_{\theta}(\bm{x}_{\bm{p}}) - \bm{Y}_{\bm{p}} \right\|^2.
\end{equation}

With this approach, problem \eqref{model:OWA_group} can be approximately specified by replacing $\bm{Y}$ with $\cM_{\theta}(\bm{x})$  in \eqref{model:OWA_group_obj}. It can then be solved assuming a suitable solution method. However, this ``two-stage'' approach faces two major challenges: 
\begin{enumerate}
    \item {\bf Scalability issues:} As discussed in the previous Section, problem \eqref{model:OWA_group} is an NP-hard nonlinear integer program whose size grows quadratically with the number of defendants. Thus an approach based on solving \eqref{model:OWA_group} will lack scalability.
    
    \item {\bf Misaligned training objective:} The training objective~\eqref{model:twostage} considers only the squared residuals of the \emph{predicted preference values}, rather than the utility~\eqref{model:OWA_group_obj} of the \emph{resulting downstream scheduling decisions}. This separation can lead to suboptimal scheduling outcomes because the model optimizes for prediction accuracy rather than the final scheduling utility. A substantial body of literature has demonstrated the benefits of training models to directly optimize the end-task objective, as discussed in Section~\ref{sec:related_work}.
\end{enumerate}

Motivated by these challenges, we propose an accurate and efficient alternative to the two-stage prediction and optimization method described above. Rather than training to minimize error in the preference values from their ground-truths as in \eqref{model:twostage}, our model is trained to \emph{directly maximize} the OWA value of predicted schedules under the ground-truth preference values. This requires computation of an optimal schedule as a function of preference values during each training iteration and performing backpropagation through the optimization process

\subsection{End-to-End Trainable Scheduling Model}
The proposed end-to-end trainable scheduling model consists of three main components: 
\begin{enumerate}
    \item A neural network $\cM_{\theta}$, which maps known features $\bm{x}_{p}$  to predicted preferences $\hat{\bm{Y}} = \cM_{\theta}(\bm{x}_{p})   $.
    \item A differentiable module $\bm{\Pi}$ that maps $\hat{\bm{Y}}$ to a \emph{predicted permutation matrix} $\bm{\Pi}(\hat{\bm{Y}})$, which satisfies constraints \eqref{model:OWA_group_individual_integers}-\eqref{model:OWA_group_individual_constraints_j}.
    \item A loss function which allows training of $\cM_{\theta}$ to optimize the objective    $\textsc{OWA}_{\mathbf{w}} \left( \; \bm{u}^{\cG}(\;\;\bm{\Pi}(\hat{\bm{Y}}),\bm{Y}_p\;\;) \right)$ expressed in~\eqref{model:OWA_group_obj}  by gradient descent.
\end{enumerate}

Composed of these elements, the resulting end-to-end ML and optimization training objective is: 
\begin{equation}
    \label{model:endtoend_training}
    \max_{\theta} \frac{1}{N} \sum_p \textsc{OWA}_{\mathbf{w}} 
    \left( \bm{u}^{\cG}\left( \bm{\Pi}(\cM_{\theta}(\bm{x}_{p})),\bm{Y}_p\right) \right).
\end{equation}

A comparison of \eqref{model:endtoend_training} with \eqref{model:OWA_group} highlights the motivation for this architecture: Gradient descent on the empirical objective \eqref{model:endtoend_training} causes $\cM_{\theta}$ to learn predictions $\hat{\bm{Y}}$ which yield the permutations $\bm{\Pi}(\hat{\bm{Y}})$ that solve \eqref{model:OWA_group} for $\bm{\Pi}^{\star}(\bm{Y}_p)$, given features $\bm{x}_p$. 
Thus, the resulting composite mapping $\bm{\Pi} \circ \cM_{\theta}$ fulfills our goal of a model which predicts schedules that solve the fair scheduling problem \eqref{model:OWA_group} \emph{without direct knowledge of preference data $\bm{Y}$}! 

What remains is to determine efficient and \emph{differentiable} implementations of the module $\bm{\Pi}$ and loss function $\textsc{OWA}_{\mathbf{w}}$, to enable backpropagation for gradient descent training of \eqref{model:endtoend_training}.

\subsection{Differentiable Matching Layer}
Our proposed fair scheduling model relies on a module $\bm{\Pi}(\bm{Y})$ which maps preference data to permutation matrices. Note that \eqref{model:OWA_group} is one such mapping. However, it is neither efficient to compute nor differentiable, thus unsuitable for gradient descent training \eqref{model:endtoend_training}. We address the efficiency aspect first, noting that the OWA objective is not required to yield a valid permutation. 

Instead, consider replacing the OWA objective in \eqref{model:OWA_group} with the total utility objective \eqref{eq:utility}:
\begin{subequations}
    \label{model:matching_layer}
    \begin{align}
       \bm{\Pi}(\bm{Y}) = {\argmax}_{\bm{\Pi}} &\;\; \mathrm{Tr}\left(\bm{Y}^T \bm{\Pi}\right) \label{model:matching_layer_obj} \\
       \mbox{subject to:} & \;\;\;\bm{\Pi} \in \{0,1\}^{n \times n} \label{model:matching_layer_integers} \\
       & \;\; \sum_{i}  \Pi_{ij} =1, \forall j \in [n] \label{model:matching_layer_constraints_i} \\
       & \;\; \sum_{j} \Pi_{ij}=1, \forall i \in [n]. \label{model:matching_layer_constraints_j}
    \end{align}
\end{subequations}
As the sum of individual utilities, the objective \eqref{model:matching_layer_obj} is linear. Additionlly, we identify the constraints \eqref{model:matching_layer_constraints_j}-\eqref{model:matching_layer_constraints_i} as being \emph{totally unimodular} with integer right-hand side. This means that when arranged in the form $\bm{A}\bm{\Pi} = \bm{b}$, the vector $\bm{b}$ is integer while every square submatrix of $\bm{A}$ has determinant $0$ or $1$. A well-known result in optimization states that the optimal solution to such a linear program (LP) must be integer-valued \cite{bazaraa2008linear}. 

Thus, we may solve \eqref{model:matching_layer} by replacing \eqref{model:matching_layer_integers} with continuous bounds $\bm{\Pi} \in [0,1]^{n \times n}$ and solving the resulting LP. In fact, this LP is recognized as the classic \emph{assignment problem}, which admits known solutions in $\cO(n^3)$ time \cite{bazaraa2008linear}. We refer to the mapping \eqref{model:matching_layer} as the \emph{matching layer}.

\begin{proposition}
    The matching layer \eqref{model:matching_layer} has complexity $\cO(n^3)$.
\end{proposition}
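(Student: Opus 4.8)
The plan is to argue directly from the reduction already spelled out in the paragraph preceding the proposition. First I would observe that the feasible region of \eqref{model:matching_layer} is the set of $n \times n$ permutation matrices, cut out by the assignment constraints \eqref{model:matching_layer_constraints_i}--\eqref{model:matching_layer_constraints_j} together with the integrality requirement \eqref{model:matching_layer_integers}. The constraint matrix of the assignment polytope is totally unimodular and the right-hand side $\bm{b} = \bm{1}$ is integral, so by the standard result on LPs over TU systems (cited as \cite{bazaraa2008linear} in the text) every vertex of the LP relaxation obtained by relaxing \eqref{model:matching_layer_integers} to $\bm{\Pi} \in [0,1]^{n\times n}$ is integral. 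Hence solving \eqref{model:matching_layer} is equivalent to solving its LP relaxation, and the latter is exactly the classical linear assignment problem with cost matrix $-\bm{Y}$ (or equivalently maximizing $\mathrm{Tr}(\bm{Y}^T\bm{\Pi})$).

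The second step is to invoke a known combinatorial algorithm for the assignment problem with the claimed running time. The Hungarian algorithm (Kuhn--Munkres) solves the $n \times n$ linear assignment problem in $\cO(n^3)$ arithmetic operations; this is the bound attributed to \cite{bazaraa2008linear} in the preceding paragraph. I would simply cite this fact rather than reproduce the algorithm: given the cost matrix derived from $\bm{Y}$, it returns an optimal permutation matrix $\bm{\Pi}(\bm{Y})$ in $\cO(n^3)$ time. Composing the two observations — the exactness of the LP relaxation and the $\cO(n^3)$ solvability of the assignment problem — yields that the map $\bm{\Pi}(\bm{Y})$ defined by \eqref{model:matching_layer} is computable in $\cO(n^3)$ time, which is the assertion of the proposition.

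There is no real obstacle here; the statement is essentially a bookkeeping corollary of the two cited classical results (total unimodularity of the assignment constraints, and the cubic-time Hungarian algorithm), both of which the excerpt permits me to assume. The only point requiring a sentence of care is making explicit that the optimization in \eqref{model:matching_layer} is a \emph{maximization} of a linear objective over the assignment polytope, so that passing to the LP relaxation neither changes the optimal value nor the (attainable) optimal vertex — this is where total unimodularity is used, and it is what lets us substitute the combinatorial assignment solver for the integer program. The remaining content is the direct citation of the $\cO(n^3)$ complexity bound.
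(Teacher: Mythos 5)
Your argument is correct and follows essentially the same route as the paper's proof: both reduce \eqref{model:matching_layer} to the classical linear assignment problem with cost matrix $-\bm{Y}$ and cite the $\cO(n^3)$ Hungarian method. The only difference is that you spell out the total-unimodularity justification for the exactness of the LP relaxation inside the proof, whereas the paper leaves that step in the preceding paragraph and keeps the proof itself to the reduction and the citation.
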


\begin{proof} 
The matching layer problem \eqref{model:matching_layer} is equivalent to the assignment problem, which can be solved in $\mathcal{O}(n^3)$ time using algorithms such as the Hungarian method \cite{Kuhn1955Hungarian}. 

Specifically, the cost matrix in the assignment problem is given by the negative of the preference matrix $\bm{Y}$, and the goal is to find the permutation matrix $\bm{\Pi}$ that maximizes the total utility (or equivalently, minimizes the total cost). Therefore $\bm{\Pi}(\bm{Y})$ can be computed in cubic time with respect to the number of defendants $n$. 
\end{proof}

It remains to implement differentiation for backpropagation in training of \eqref{model:endtoend_training}. Differentiation of linear programs has recently been thoroughly studied \cite{Fioretto:jair24}, and one of various proposals may be employed. We choose a method \cite{vlastelica2020differentiation} which approximates gradients through an LP problem by a finite difference between two solutions of \eqref{model:matching_layer} under perturbed inputs $\bm{Y}$. Thus the combined forward and backward passes through   \eqref{model:matching_layer} are assured $\cO(n^3)$ complexity, yielding an efficient and differentiable matching layer.

\subsection{OWA as a Loss Function}
To enable gradient descent training for \eqref{model:endtoend_training}, the final element to be specified is a method for backpropagating the OWA loss function. Although the OWA is not differentiable, it is \emph{subdifferentiable} with known subgradients \cite{do2022optimizing}:
\begin{equation}
    \label{eq:owa_subgrad}
    \frac{\partial}{\partial \bm{x}} \textsc{OWA}_{\bm{w}}(\bm{x}) = \bm{w}_{(\sigma^{-1})},
\end{equation}
where $\sigma$ is the sorting permutation for $\bm{x}$. Our main approach is to implement subgradient descent training for \eqref{model:endtoend_training} using the formula   \eqref{eq:owa_subgrad}. However, we also investigate the use of an alternative gradient rule in this paper. For any convex function $f$, the \emph{Moreau envelope} $f^{\beta}$ is a $\frac{1}{\beta}$ smooth lower-bounding function with the same minimum: 
\begin{equation}
    \label{eq:moreau_env}
    f^{\beta}(\mathbf{x}) = \min_{\mathbf{v}} \;\;  f(\mathbf{v}) + \frac{1}{2 \beta} \|   \mathbf{v} - \mathbf{x}  \|^2.
\end{equation}

It is proven in \cite{do2022optimizing} that the gradient of OWA's Moreau envelope is equal to the projection of a vector $\mathbf{x}$ onto the permutahedron $\cC(\tilde{\bm{w}})$ induced by the OWA weights $\bm{w}$:
\begin{equation}
    \label{eq:moreau_projection}
    \frac{\partial}{\partial \bm{x}} \textsc{OWA}_{\bm{w}}^{\beta}(\mathbf{x}) = \proj_{\cC(\tilde{\bm{w}})} \left(\frac{\bm{x}}{\beta}\right).
\end{equation}
Thus in Section \ref{sec:Experiments}, we evaluate an additional training of \eqref{model:endtoend_training}, using OWA's Moreau envelope as a differentiable loss function. We implement \eqref{eq:moreau_projection} using isotonic regression following \cite{blondel2020fast}. Both \eqref{eq:owa_subgrad} and \eqref{eq:moreau_projection} have the same computational complexity, driven by the sorting operation, which is $\cO(n \log n)$.

\section{Experimental Settings}
\label{sec:Experiments}
To validate the effectiveness of our proposed integrated optimization and learning framework for fair court scheduling, we conduct a series of experiments on court scheduling using a novel synthetic dataset. 
We first focus on the approach adopted to generate the synthetic preference data for court scheduling.

\subsection{Data Generation Process}

The experiments simulate realistic court scheduling scenarios by creating a causal graph that models the data generation process, incorporating individuals' preferences, socioeconomic status, and demographic characteristics. This causal graph, depicted in Figure \ref{fig:causalgraph_court}, illustrates the relationships among these factors. 
Each feature is treated as a categorical variable, generated based on the conditional probabilities detailed in Appendix \ref{app:prob_tab}, which are elicited by combining, interviewing court scheduling experts with Census data. 

To emulate a scenario requiring fair scheduling, we generate individuals' preferences to predominantly favor specific time slots. For example, individuals reliant on public transportation, those working night shifts, or those with childcare responsibilities are more likely to prefer morning slots (8-12 AM). Each individual's preference vector $\bm{y}_i$ is generated under the constraints that $\sum_{j} y_{ij} =1$ and $0 \leq y_{ij} \leq 1$. Additionally, we assume that each individual has three top choices among all possible time slots, ranked from highest to lowest priority. The second and third choices are set to one hour before and one hour after the primary choice, respectively. This setup ensures that when specific time slots have limited availability, individuals' second or third preferences are considered, promoting fairness and flexibility in scheduling.

\begin{figure}[th]
\centerline{\includegraphics[width=0.6\columnwidth]{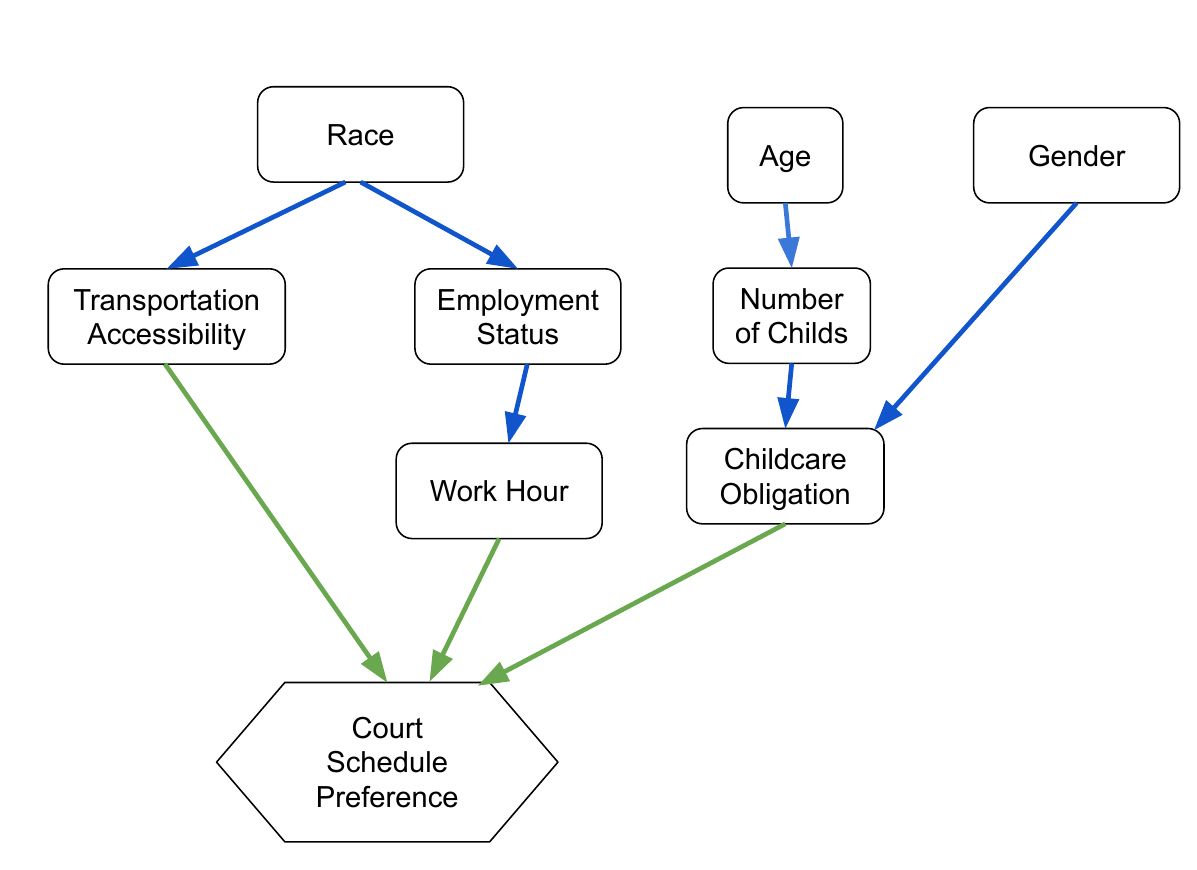}}
\caption{Causal relationship depicts factors affecting individual's court schedule preference. Blue arrows indicate indirect relationships, while green arrows indicate direct relationships. }
\label{fig:causalgraph_court}
\end{figure}

We generate different training sets with varying pool sizes of defendants, specifically $N \in \{25, 50, 100, 250, 500, 4000\}$, where each pool consists of $n=12$ individuals assigned to corresponding time slots. Each individual is allocated to exactly one slot. The paper evaluates the proposed framework based on both group and individual fairness notions (where the group size is 1 for individual fairness). For group fairness evaluation, the paper creates different settings using protected group attributes such as transportation accessibility, employment status, and work hours. The performance is evaluated on the same test set containing $N = 500$ samples, generated using the same probability distribution as the training data. More details about the probability distributions of both training and test sets can be found in Appendix \eqref{app:prob_tab}.

\subsection{Model Settings and Evaluation Metrics}

\begin{figure*}[th]
\centerline{\includegraphics[width=1.\columnwidth]{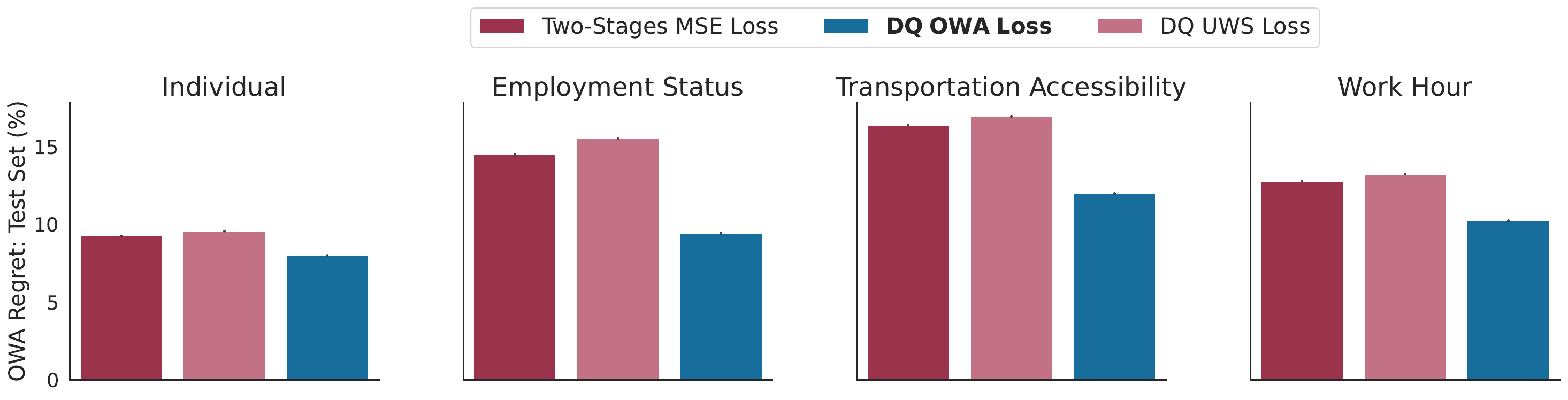}}
\caption{OWA utility regret \eqref{eq:regret} in court scheduling across different fairness levels: individual (first subplot) vs. group (last three subplot). Protected group attributes are employment status, transportation accessibility, and work hours. }
\label{fig:results_court}
\end{figure*}

\paragraph{Settings.}
A feedforward neural network $\cM_{\theta}$ is trained to predict for a pool of $n$ candidates, given features $\bm{x}$, their preference scores $\bm{Y} \in \mathbb{R}^{n\times n}$ for $n$ available slots. The network consists of two hidden layers, where the size of each successive layer is halved. The model is trained using the Adam optimizer, with a learning rate of 0.01 and a batch size $\in$ \{64, 128, 512\}. Results for each hyperparameter setting are taken on average over five random seeds.

\begin{figure*}[t]
\centerline{\includegraphics[width=1.\columnwidth]{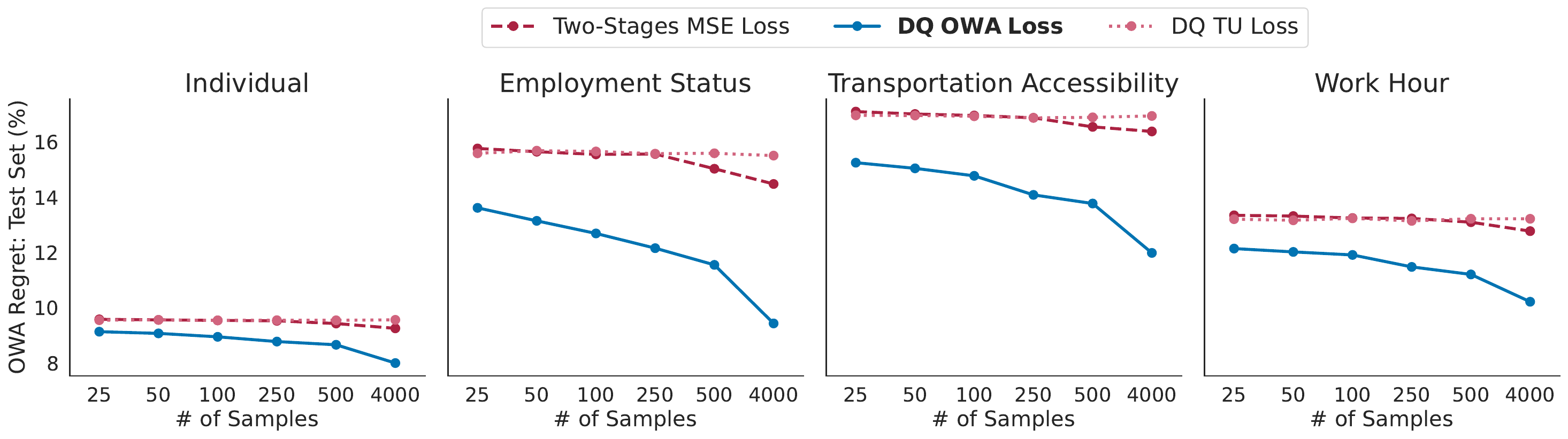}}
\caption{Benchmarking OWA Regret (in percentage) across different training data sizes.}
\label{fig:results_sparse}
\end{figure*}

\paragraph{Evaluation metrics.}
All the experiments are evaluated primarily using two key metrics: \emph{regret} and \emph{normalized pairwise distances}. 

\smallskip\noindent\textbf{Regret:} 
Regret quantifies the loss of optimality in the obtained schedule due to prediction errors in the estimated preferences $\hat{\bm{Y}}$. Specifically, it measures how much worse the OWA objective value is when using the predicted preferences compared to using the true preferences. Formally, regret is defined as:
\begin{align}
    \label{eq:regret}
    \textit{regret}(\hat{\bm{Y}},\bm{Y}) &= \textsc{OWA}_{\mathbf{w}} \left( \; \bm{u}^{\cG}(\bm{\Pi}^{\star}(\bm{Y}),\bm{Y}) \;\right) \notag \\ 
    &- \textsc{OWA}_{\mathbf{w}} \left( \; \bm{u}^{\cG}(\bm{\Pi}^{\star}(\bm{\hat{Y}}),\bm{Y}) \;\right),
\end{align}
where:
$\bm{Y}$ represents the ground-truth preference matrix,
$\hat{\bm{Y}}$ is the predicted preference matrix obtained from the model $\cM_{\theta}$,
$\bm{\Pi}^{\star}(\bm{Y})$ is the optimal schedule computed using the true preferences,
$\bm{\Pi}^{\star}(\hat{\bm{Y}})$ is the schedule computed using the predicted preferences,
$\bm{u}^{\cG}$ computes the utility vector for the group $\cG$ under the given schedule and preferences.
A lower regret value indicates that the predicted schedule is closer to the optimal one, with a regret of zero signifying that the prediction $\hat{\bm{Y}}$ leads to the optimal schedule under the true preferences $\bm{Y}$. Minimizing regret is crucial, as it reflects the accuracy of the scheduling decisions in terms of the OWA objective.

\smallskip\noindent\textbf{Normalized pairwise difference:} 
The normalized pairwise difference (NMPD) is an intuitive measure of fairness across individuals. It assesses how similarly individuals are treated by comparing the differences in their outcomes. Formally, for a set of individuals with outcomes $(u_1, \ldots, u_n)$, the NMPD is defined as:
\begin{equation}
    \label{eq:gini}
    \textit{NMPD}(\bm{u}) = \frac{1}{n^2 \bar{\bm{u}}} \sum_{i=1}^n\sum_{j=1}^n |u_i - u_j| \; \;\;\;\mbox{with} \; \; \bar{\bm{u}} = \frac{1}{n}\sum_{i=1}^n u_i
\end{equation}
The normalized mean pairwise difference is valuable as a fairness metric because it reflects how similarly or differently individuals with analogous characteristics are treated. A lower NMPD value indicates that the outcomes are more uniform across individuals, suggesting a fairer distribution of utility. Conversely, a higher NMPD signifies greater disparities in individual outcomes, pointing to potential biases or unfairness in the scheduling process. Therefore, minimizing NMPD is important for achieving equitable treatment of all defendants.

\subsection{Baseline Models}

To evaluate the effectiveness of our proposed model, we compare it against two baseline methods:

\begin{enumerate}
\item \textbf{Two-Stage Method}: 
This standard baseline in Predict-Then-Optimize frameworks, as discussed by \cite{mandi2023decision}, involves training the prediction model $\hat{\bm{Y}} = \cM_{\theta}(\bm{x})$ using mean squared error (MSE) regression. The training objective is to minimize the loss:
   \[
   \mathcal{L}_{\text{TS}}(\hat{\bm{Y}}, \bm{Y}) = \| \hat{\bm{Y}} - \bm{Y} \|^2.
   \]
   The downstream optimization model is employed only at test time and is not considered during training. This approach may not account for the impact of prediction errors on the final scheduling utility.

\item \textbf{Total Utility (TU) Loss}: This method employs an end-to-end learning approach using a differentiable matching layer but optimizes for the total sum of utilities rather than incorporating the Ordered Weighted Average (OWA) objective. While this method benefits from integrating the downstream optimization into the learning process, it may not fully capture fairness considerations inherent in the OWA objective.
\end{enumerate}

\section{Results}

In this section, we present the results of our experiments evaluating the performance of our proposed integrated optimization and learning framework for fair court scheduling. We compare the performance of three models in each evaluation setting:
\begin{enumerate}
\item \textbf{Two-Stage MSE Loss}: The traditional two-stage model trained using Mean Squared Error (MSE) loss without considering downstream optimization.
\item \textbf{OWA Loss Decision Quality (DQ)}: Our proposed end-to-end model trained to directly optimize the OWA objective, integrating the prediction and optimization stages.
\item \textbf{Total Utility (TU) Loss}: An end-to-end model trained to maximize the total utility without incorporating the OWA fairness considerations.
\end{enumerate}

\subsection{OWA Utility Regret}
Figure \ref{fig:results_court} illustrates the OWA regret (expressed as a percentage) across four different fairness settings: individual fairness, employment status, transportation accessibility, and work hours. The evaluation is conducted on a training dataset consisting of $N=4000$ samples. The OWA regret measures the loss of optimality relative to the ground truth preferences, with lower values indicating better alignment with the optimal solution. 
\begin{figure*}[t]
\centerline{\includegraphics[width=1.\columnwidth]{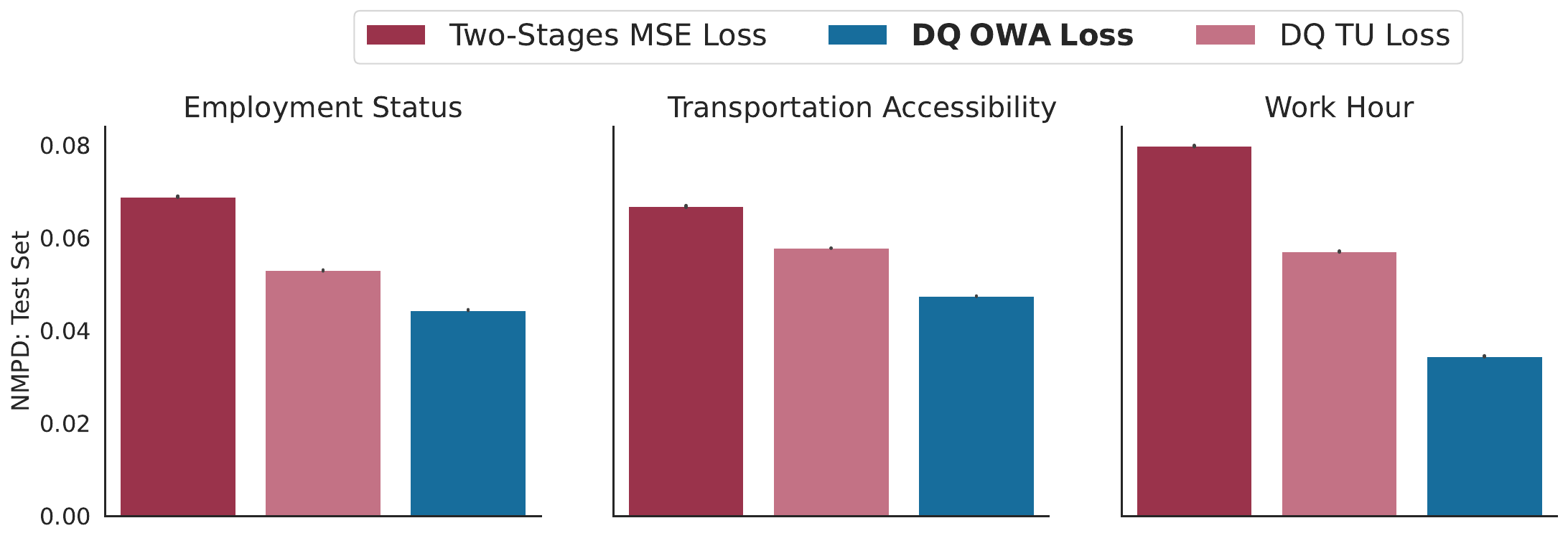}}
\caption{Normalized mean pairwise difference \eqref{eq:gini} in court scheduling across different fairness levels: individual (first subplot) vs. group (last three subplot). Protected group attributes are employment status, transportation accessibility, and work hours. }
\label{fig:results_court_mpd}
\end{figure*}

Across all fairness settings, the {\bf OWA Loss DQ} model consistently achieves the lowest regret values, indicating superior performance in minimizing the loss of optimality. Specifically, it attains regret values of less than 10\% in the individual fairness and employment status settings, and 11\% and 12.5\% in the work hours and transportation accessibility settings, respectively.

The \textbf{Two-Stage MSE Loss} and \textbf{TU Loss} models exhibit similar performance, with the Two-Stage method slightly outperforming the TU Loss method in certain settings. For instance, the Two-Stage model shows improvements of 0.9\% and 0.4\% over the TU Loss model in the employment status and transportation accessibility contexts, respectively.

Generally, end-to-end learning approaches, such as the {\bf OWA Loss DQ} and  \textbf{TU Loss} models, tend to deliver better performance than two-stage methods, as they integrate prediction and optimization into a unified framework. However, when the training dataset is sufficiently large, the \textbf{Two-Stage MSE Loss} approach demonstrates its capability to effectively minimize decision quality loss due to the abundance of data improving prediction accuracy.

Notably, our {\bf OWA Loss DQ} model performs significantly better in the employment status and transportation accessibility settings, achieving improvements of 7\% and 5.2\%, respectively, compared to the \textbf{TU Loss} model. Furthermore, the  {\bf OWA Loss DQ} outperforms the \textbf{Two-Stage MSE Loss} method by margins of 6.1\% and 4\% in these settings.

In contrast, the performance gains in the individual fairness and work hours settings are smaller, with improvements of only 1.5\% and 3.6\%, respectively. This variation in performance suggests that the effectiveness of the models may depend on the characteristics of the dataset and the specific fairness constraints involved.

When the group partitions are such that there are no competing demands among groups—that is, when the preferences of different groups do not conflict—the solutions tend to be nearly optimal regardless of the model used. This observation highlights the importance of understanding the underlying data structure and the dynamics of group preferences when evaluating fairness in scheduling.

Figure \ref{fig:results_sparse} illustrates the performance of all models in terms of OWA regret on the test set across varying training data sizes. The plot shows regret percentages for test sets across different sample sizes (25, 50, 100, 250, 500, 4000) and fairness settings: individual fairness, employment status, transportation accessibility, and work hours. The y-axis represents the regret percentage, while the x-axis shows the number of samples used. For each setting, the regret decreases as the number of samples increases, indicating that the model performs better with more data. Different fairness settings exhibit varying regret levels, matching the trend demonstrated in Figure \ref{fig:results_court}. For example, the transportation accessibility setting consistently shows higher regret compared to others, suggesting more difficulty in optimizing solutions effectively. In contrast, the employment status setting shows a significant drop in regret as training size increases, aligning with previous observations that dataset biases impact model learning. There is a small drop in regret when sample sizes increase from 25 to 100, with a more noticeable decrease between 250–500 samples. Both \textbf{Two-Stage MSE Loss} and  \textbf{TU Loss}  models perform worse with fewer samples and show only small improvements as the sample size grows. In contrast, our {\bf OWA Loss DQ} model not only has a steeper learning curve but also outperforms other models—even with small datasets (N=25)—achieving regret percentages of 9.2\%, 13.8\%, 14.3\%, and 12.1\% for individual, employment status, transportation accessibility, and work hour settings, respectively. This highlights our model's strength in operating on small datasets, which is particularly valuable when preference datasets for court systems are difficult to obtain.

\subsection{Normalized Pairwise Difference}

To comprehensively evaluate the fairness of our scheduling models, we utilize the Normalized Mean Pairwise Difference (NMPD) metric. Figure \ref{fig:results_court_mpd}
presents the NMPD values across various fairness settings, including employment status, transportation accessibility, and work hours. 

\paragraph{Fairness performance across settings.} 
Across all fairness settings, the NMPD values remain relatively low, typically below 0.08. This indicates that the models maintain a high degree of fairness by ensuring consistent treatment of individuals, with minimal differences in outcomes. Specifically:
\begin{itemize}
    \item {\it End-to-End Models}: Both the  {\bf OWA Loss DQ} and {\bf TU Loss} models exhibit consistently low NMPD values across all settings, demonstrating their effectiveness in promoting equitable treatment. Our proposed framework, {\bf OWA Loss DQ} model, consistently outperforms the  {\bf TU Loss} model, achieving lower NMPD values and thereby ensuring greater fairness.
    
    \item {\it Two-Stage MSE Loss}: This traditional approach shows slightly higher NMPD values, particularly in the work hours and transportation accessibility settings. This suggests that the two-stage method is less effective in maintaining fairness in these areas compared to the end-to-end models.
\end{itemize}

\begin{figure}[ht]
    \centering
    \includegraphics[width=0.6\columnwidth]{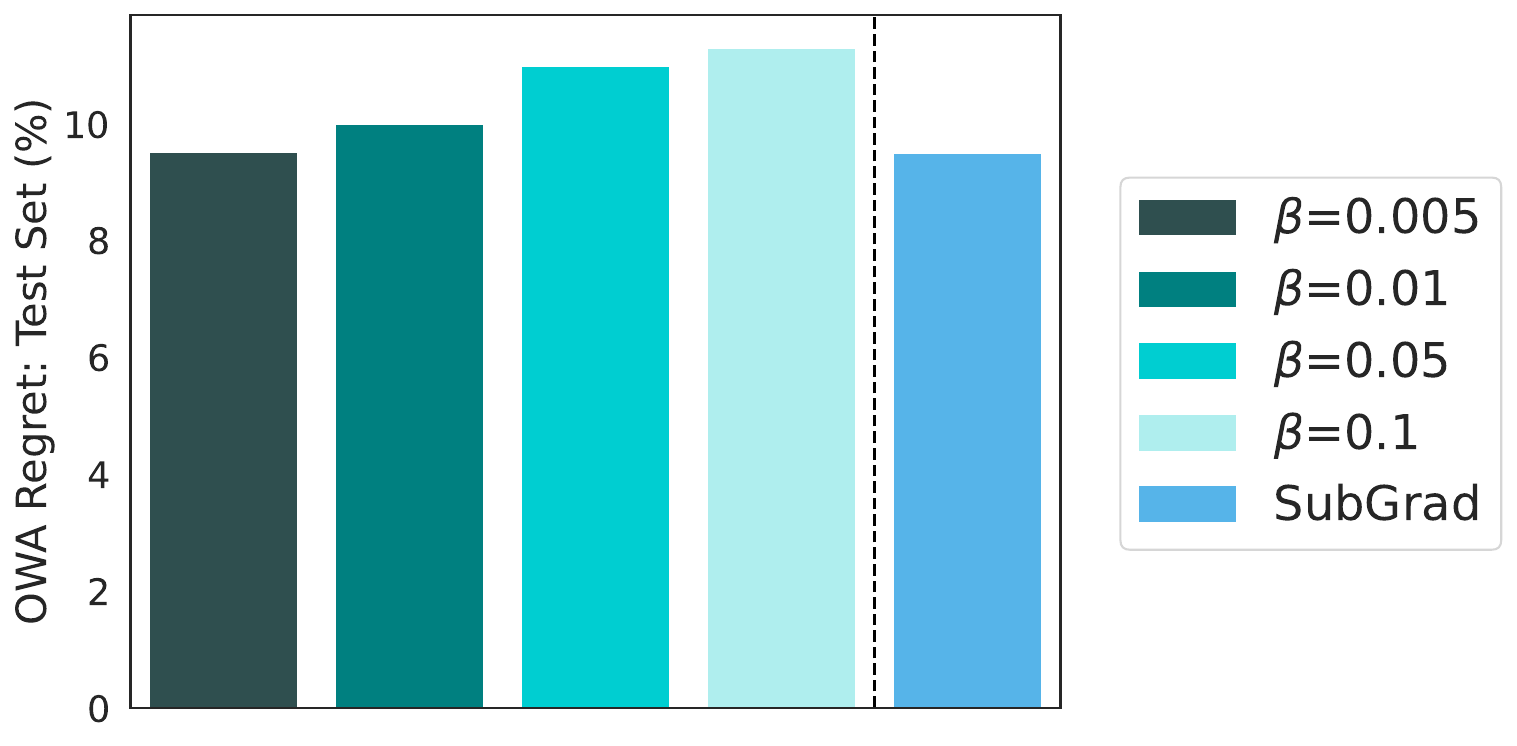}
    \caption{Effect of training OWA DQ Loss with Subgradients and Moreau envelope gradient. The first four bars represent OWA DQ loss with Moreau OWA's Gradient under various smoothing hyperparameters, while the leftmost bar shows OWA DQ loss with subgradients. Results reported on employment status setting.  }
    \label{fig:subgrad}
\end{figure}

\paragraph{Impact of gradient-based methods.}
Figure \ref{fig:subgrad} compares the effect of training OWA loss with these two gradient-based methods. The smoothness of the Moreau envelope, controlled by the hyperparameter $\beta$, is reflected in the performance shown by the four leftmost bars in the figure.  Smaller values of $\beta$ result in improved performance, with $\beta = 0.005$ yielding results comparable to the subgradient method. These findings align with the theoretical asymptotic behavior \cite{blondel2020fast}. Therefore, for simplicity, we adopt the subgradient model $\bm{g}$ in \eqref{eq:owa_subgrad} as our preferred approach for the remainder of the paper.

\smallskip

These experiments reveal that the  {\bf OWA Loss DQ} model consistently achieves the lowest NMPD values across all fairness settings, thus ensuring higher fair decision-making. 
In particular, the low NMPD values across models, especially within end-to-end approaches, highlight the effectiveness of integrating fairness objectives directly into the learning and optimization processes. This is an important consideration as two-stage methods falls short in certain settings. These results validate the importance of incorporating fairness directly into the model training process and demonstrate the effectiveness of our integrated optimization and learning framework in achieving equitable and efficient court scheduling outcomes.

\begin{figure}[h]
    \centering
    \includegraphics[width=0.6\columnwidth]{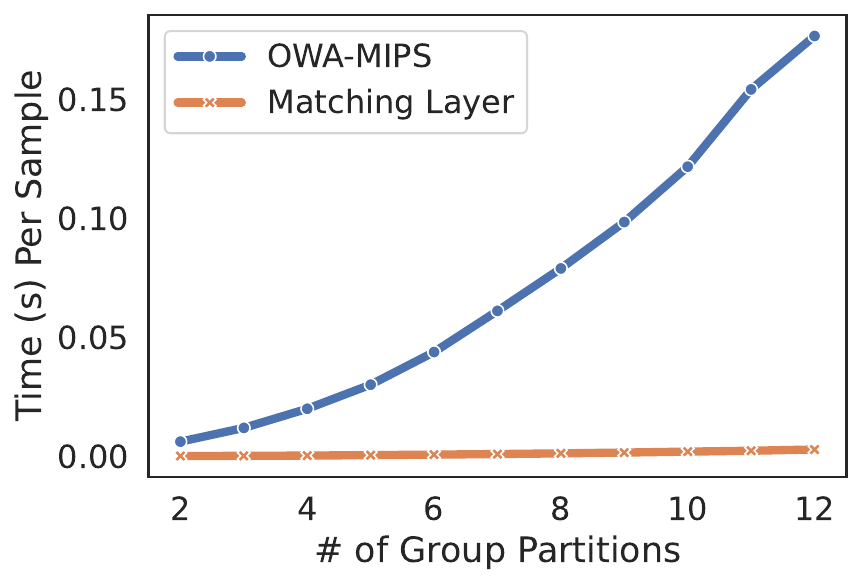}
    \caption{Benchmarking the runtime of two optimization models: OWA-ILPs used in the \textbf{Two-Stage MSE Loss} model during inference, and the Matching Layer employed in the  {\bf OWA Loss DQ} model.}
    \label{fig:time_complexity}
\end{figure}
\subsection{Running Time}

Figure \ref{fig:time_complexity} presents the runtime comparison of two optimization models for the court scheduling problem: \textit{OWA-ILPs}  corresponding to Problem \eqref{model:OWA_group}, and the\textit{Matching Layer}  representing Problem \eqref{model:matching_layer}. 

The x-axis denotes the number of group partitions, while the y-axis indicates the average time (in seconds) required to solve the scheduling problem for each sample pool, averaged over $N = 1000$ runs.
As detailed in Section \ref{sec:OWA_optimization}, solving Problem \eqref{model:OWA_group}
involves addressing an Integer Linear Program (ILP). ILPs are computationally intensive and face significant scalability challenges as the number of group partitions increases due to the exponential growth of constraints. This scalability issue is clearly illustrated in Figure \ref{fig:time_complexity}, where the runtime for OWA-ILPs grows polynomially with the number of partitions, rendering it impractical for large-scale scheduling tasks.

In contrast, the Matching Layer uses a linear optimization approach, dramatically improving computational efficiency. The runtime for the Matching Layer scales linearly with the number of partitions, as shown in Figure \ref{fig:time_complexity}. For example, when set to 12 partitions, the Matching Layer solves the scheduling problem in just 0.002 seconds. This linear scalability makes the Matching Layer ideal for real-time and large-scale court scheduling applications where quick decision-making is essential.

The stark difference in runtime performance between the two models highlights the technical advantage of the Matching Layer over traditional ILP-based approaches. By leveraging linear optimization techniques and exploiting problem-specific structures, the Matching Layer not only reduces computational burden but also facilitates the integration of fairness objectives into the scheduling process without sacrificing efficiency. This makes it a highly effective solution for practical court scheduling systems that require both fairness and scalability.



\section{Conclusions}
This paper addressed the critical problem of fair court scheduling in pretrial processes. Scheduling defendants' court appearances in a manner that is both efficient and fair is essential for upholding justice and maintaining public trust in the legal system. The challenge lies in aligning court schedules with defendants' preferences, which are often unknown and must be predicted from available data, while ensuring that the scheduling process adheres to fairness principles to prevent systemic biases and disparate impacts on different groups.

To address these key challenges this paper proposed an integrated optimization and learning framework that combines machine learning with Fair Ordered Weighted Average (OWA) optimization. Unlike traditional two-stage approaches that handle prediction and optimization separately, our method integrates the prediction of defendants' preferences with the scheduling optimization process. This integration allows for direct optimization of scheduling utility under fairness constraints, leading to more equitable and efficient scheduling outcomes.

Through extensive experiments, we demonstrated that our integrated framework outperforms baseline models in terms of both scheduling optimality and fairness across various scenarios. Our results highlight the effectiveness of incorporating fairness objectives into the learning process, particularly in complex settings with competing group preferences.
We believe that this work could pave the way for the utilization of Fair OWA in learning pipelines, enabling a wide range of critical multi-optimization problems across various  domains that extend beyond scheduling applications.
\section*{Acknowledgement}
This research is partially supported by NSF grants 2232054, 2232055, 2133169, and NSF CAREER Award 2143706. The views and conclusions of this work are those of the authors only.
\bibliographystyle{unsrt}  
\bibliography{main_arxiv}

\begin{thebibliography}{10}

\bibitem{gouldin2024keeping}
Laura~P. Gouldin.
\newblock Keeping up appearances.
\newblock {\em University of California at Davis Law Review}, 58, 2024.
\newblock forthcoming.

\bibitem{cjil2022northcarolina}
{North Carolina Court Appearance Project}.
\newblock North carolina court appearance project: Findings and policy solutions from new hanover, orange, and robeson counties, 2022.
\newblock Accessed: 2024-10-11.

\bibitem{reaves2013felony}
B.A. Reaves.
\newblock Felony defendants in large urban counties, 2009 – statistical tables.
\newblock Technical report, U.S. Department of Justice, Bureau of Justice Statistics, December 2013.

\bibitem{zettler2015exploratory}
Haley Zettler and Robert Morris.
\newblock An exploratory assessment of race and gender-specific predictors of failure to appear in court among defendants released via a pretrial services agency.
\newblock {\em Criminal Justice Review}, 40, 2015.

\bibitem{ideas42_2023}
{ideas42}.
\newblock National guide to improving court appearance: Enhancing court compliance through behavioral science, 2023.
\newblock Accessed: 2024-10-11.

\bibitem{lane1976guide}
Patrick Lane and William et~al. Cox.
\newblock {\em Guide to Court Scheduling, 1 - A Framework for Criminal and Civil Courts}.
\newblock Institute for Law and Social Research, 1976.

\bibitem{graef2023systemic}
Lindsay Graef, Sandra~G. Mayson, Aurelie Ouss, and Megan~T. Stevenson.
\newblock Systemic failure to appear in court.
\newblock {\em University of Pennsylvania Law Review}, 172:1, 11, 2023.

\bibitem{inslaw1988decision}
{Institute for Law and Social Research (INSLAW)}.
\newblock Decision-related research on technology utilized by local government: Court scheduling phase ii, 1988.

\bibitem{ferguson2022courts}
Andrew~Guthrie Ferguson.
\newblock Courts without court.
\newblock {\em Vanderbilt Law Review}, 75:1461--1466, 2022.

\bibitem{kotary2021end}
James Kotary, Ferdinando Fioretto, Pascal Van~Hentenryck, and Bryan Wilder.
\newblock End-to-end constrained optimization learning: A survey.
\newblock In {\em Proceedings of the Thirtieth International Joint Conference on Artificial Intelligence, {IJCAI-21}}, pages 4475--4482, 2021.

\bibitem{gould2016differentiating}
Stephen Gould, Basura Fernando, Anoop Cherian, Peter Anderson, Rodrigo~Santa Cruz, and Edison Guo.
\newblock On differentiating parameterized argmin and argmax problems with application to bi-level optimization, 2016.

\bibitem{amos2019optnet}
Brandon Amos and J~Zico Kolter.
\newblock Optnet: Differentiable optimization as a layer in neural networks.
\newblock In {\em International Conference on Machine Learning}, pages 136--145. PMLR, 2017.

\bibitem{amos2019limited}
Brandon Amos, Vladlen Koltun, and J~Zico Kolter.
\newblock The limited multi-label projection layer, 2019.

\bibitem{konishi2021end}
Takuya Konishi and Takuro Fukunaga.
\newblock End-to-end learning for prediction and optimization with gradient boosting.
\newblock In {\em Joint European Conference on Machine Learning and Knowledge Discovery in Databases}, pages 191--207. Springer, 2021.

\bibitem{donti2017task}
Priya Donti, Brandon Amos, and J~Zico Kolter.
\newblock Task-based end-to-end model learning in stochastic optimization.
\newblock In {\em NIPS}, pages 5484--5494, 2017.

\bibitem{blondel2020fast}
Mathieu Blondel, Olivier Teboul, Quentin Berthet, and Josip Djolonga.
\newblock Fast differentiable sorting and ranking.
\newblock In {\em International Conference on Machine Learning}, pages 950--959. PMLR, 2020.

\bibitem{mandi2020interior}
Jayanta Mandi and Tias Guns.
\newblock Interior point solving for lp-based prediction+optimisation.
\newblock In {\em Advances in Neural Information Processing Systems ({NeurIPS})}, 2020.

\bibitem{agrawal2019differentiating}
Akshay Agrawal, Shane~T. Barratt, Stephen~P. Boyd, Enzo Busseti, and Walaa~M. Moursi.
\newblock Differentiating through a cone program, 2019.

\bibitem{wilder2018melding}
Bryan Wilder, Bistra Dilkina, and Milind Tambe.
\newblock Melding the data-decisions pipeline: Decision-focused learning for combinatorial optimization.
\newblock In {\em Proceedings of the {AAAI} Conference on Artificial Intelligence ({AAAI})}, volume~33, pages 1658--1665, 2019.

\bibitem{ferber2020mipaal}
Aaron Ferber, Bryan Wilder, Bistra Dilkina, and Milind Tambe.
\newblock Mipaal: Mixed integer program as a layer.
\newblock In {\em Proceedings of the AAAI Conference on Artificial Intelligence}, volume 34-02, pages 1504--1511, 2020.

\bibitem{berthet2020learning}
Quentin Berthet, Mathieu Blondel, Olivier Teboul, Marco Cuturi, Jean-Philippe Vert, and Francis Bach.
\newblock Learning with differentiable pertubed optimizers.
\newblock {\em Advances in neural information processing systems}, 33:9508--9519, 2020.

\bibitem{paulus2020gradient}
Max Paulus, Dami Choi, Daniel Tarlow, Andreas Krause, and Chris~J Maddison.
\newblock Gradient estimation with stochastic softmax tricks.
\newblock {\em Advances in Neural Information Processing Systems}, 33:5691--5704, 2020.

\bibitem{poganvcic2019differentiation}
Marin~Vlastelica Pogan{\v{c}}i{\'c}, Anselm Paulus, Vit Musil, Georg Martius, and Michal Rolinek.
\newblock Differentiation of blackbox combinatorial solvers.
\newblock In {\em International Conference on Learning Representations}, 2019.

\bibitem{sekhar2022gradient}
Subham Sekhar~Sahoo, Marin Vlastelica, Anselm Paulus, V{\'\i}t Musil, Volodymyr Kuleshov, and Georg Martius.
\newblock Gradient backpropagation through combinatorial algorithms: Identity with projection works, 2022.

\bibitem{kotary2022end}
James Kotary, Ferdinando Fioretto, Pascal Van~Hentenryck, and Ziwei Zhu.
\newblock End-to-end learning for fair ranking systems.
\newblock In {\em Proceedings of the ACM Web Conference 2022}, pages 3520--3530, 2022.

\bibitem{elmachtoub2020smart}
Adam~N Elmachtoub and Paul Grigas.
\newblock Smart “predict, then optimize”, 2021.

\bibitem{wilder2019melding}
Bryan Wilder, Bistra Dilkina, and Milind Tambe.
\newblock Melding the data-decisions pipeline: Decision-focused learning for combinatorial optimization.
\newblock In {\em AAAI}, volume~33, pages 1658--1665, 2019.

\bibitem{ogryczak2003solving}
W{\l}odzimierz Ogryczak and Tomasz {\'S}liwi{\'n}ski.
\newblock On solving linear programs with the ordered weighted averaging objective.
\newblock {\em European Journal of Operational Research}, 148(1):80--91, 2003.

\bibitem{Yager199380}
Ronald~R. Yager.
\newblock On ordered weighted averaging aggregation operators in multicriteria decisionmaking.
\newblock In Didier Dubois, Henri Prade, and Ronald~R. Yager, editors, {\em Readings in Fuzzy Sets for Intelligent Systems}, pages 80--87. Morgan Kaufmann, 1993.

\bibitem{10.5555/2464909}
Ronald~R. Yager and J.~Kacprzyk.
\newblock {\em The Ordered Weighted Averaging Operators: Theory and Applications}.
\newblock Springer Publishing Company, Incorporated, 2012.

\bibitem{10.1155/2014/612018}
Wlodzimierz Ogryczak, Hanan Luss, MichaÅ Pi{\'o}ro, Dritan Nace, and Artur Tomaszewski.
\newblock {Fair Optimization and Networks: A Survey}.
\newblock {\em Journal of Applied Mathematics}, 2014(SI08):1 -- 25, 2014.

\bibitem{do2022optimizing}
Virginie Do and Nicolas Usunier.
\newblock Optimizing generalized gini indices for fairness in rankings.
\newblock In {\em Proceedings of the 45th International ACM SIGIR Conference on Research and Development in Information Retrieval}, pages 737--747, 2022.

\bibitem{singh2018fairness}
Ashudeep Singh and Thorsten Joachims.
\newblock Fairness of exposure in rankings.
\newblock In {\em Proceedings of the 24th ACM SIGKDD International Conference on Knowledge Discovery \& Data Mining}, pages 2219--2228, 2018.

\bibitem{singh2019policy}
Ashudeep Singh and Thorsten Joachims.
\newblock Policy learning for fairness in ranking, 2019.

\bibitem{zehlike2020reducing}
Meike Zehlike and Carlos Castillo.
\newblock Reducing disparate exposure in ranking: A learning to rank approach.
\newblock In {\em Proceedings of the web conference 2020}, pages 2849--2855, 2020.

\bibitem{zehlike2017fa}
Meike Zehlike, Francesco Bonchi, Carlos Castillo, Sara Hajian, Mohamed Megahed, and Ricardo Baeza-Yates.
\newblock Fa*ir: A fair top-k ranking algorithm.
\newblock In {\em Proceedings of the 2017 ACM on Conference on Information and Knowledge Management}, CIKM '17, page 1569–1578, New York, NY, USA, 2017. Association for Computing Machinery.

\bibitem{bazaraa2008linear}
Mokhtar~S Bazaraa, John~J Jarvis, and Hanis~D Sherali.
\newblock {\em Linear programming and network flows}.
\newblock John Wiley \& Sons, 2008.

\bibitem{Kuhn1955Hungarian}
Harold~W. Kuhn.
\newblock {The Hungarian Method for the Assignment Problem}.
\newblock {\em Naval Research Logistics Quarterly}, 2(1--2):83--97, March 1955.

\bibitem{Fioretto:jair24}
Jayanta Mandi, James Kotary, Senne Berden, Maxime Mulamba, Victor Bucarey, Tias Guns, and Ferdinando Fioretto.
\newblock Decision-focused learning: Foundations, state of the art, benchmark and future opportunities.
\newblock {\em Journal of Artificial Intelligence Research}, 81:1623--1701, 2024.

\bibitem{vlastelica2020differentiation}
Marin~Vlastelica Pogan{\v{c}}i{\'c}, Anselm Paulus, Vit Musil, Georg Martius, and Michal Rolinek.
\newblock Differentiation of blackbox combinatorial solvers.
\newblock In {\em International Conference on Learning Representations ({ICLR})}, 2020.

\bibitem{mandi2023decision}
Jayanta Mandi, James Kotary, Senne Berden, Maxime Mulamba, Victor Bucarey, Tias Guns, and Ferdinando Fioretto.
\newblock Decision-focused learning: Foundations, state of the art, benchmark and future opportunities.
\newblock {\em Journal of Artificial Intelligence Research}, TBA:TBA, 2024.

\end{thebibliography}

\newpage\appendix

\section{Causal Graph Conditional Probability Tables}
\label{app:prob_tab}

Below are the conditional distributions of the causal graph depicted in Figure \ref{fig:causalgraph_court}. 

\subsection*{Distribution of Race among Defendants}
\centering 
\begin{tabular}{lc} 
\toprule \textbf{Race} & \textbf{P(x)} \\ 
\midrule 
White & 0.5 \\ 
Non White & 0.5 \\ 
\bottomrule 
\end{tabular} 

\subsection*{Distribution of Age Groups among Defendants}
\centering 
\begin{tabular}{lc} 
\toprule 
\textbf{Age Group} & \textbf{P(x)} \\ 
\midrule 
Below 18 & 0.05 \\ 
18-54 & 0.8 \\ 
Above 55 & 0.15 \\ 
\bottomrule 
\end{tabular} 

\subsection*{Distribution of Gender among Defendants}
\centering 
\begin{tabular}{lc} 
\toprule \textbf{Gender} & \textbf{P(x)} \\ 
\midrule Male & 0.45 \\ Female & 0.55 \\ 
\bottomrule 
\end{tabular}

\subsection*{Transportation Accessibility by Race}

\centering \begin{tabular}{llc} \toprule 
\textbf{Transportation Accessibility (x)} & \textbf{Race (y)} & \textbf{P(x|y)} \\ 
\midrule Public transportation & White & 0.8 \\ 
Private transportation & White & 0.2 \\ 
Public transportation & Non White & 0.6 \\ 
Private transportation & Non White & 0.4 \\ 
\bottomrule \end{tabular} 

\subsection*{Employment Status by Race}

\centering \begin{tabular}{llc} \toprule 
\textbf{Employment Status (x)} & \textbf{Race (y)} & \textbf{P(x|y)} \\
\midrule 
Employed & White & 0.8 \\ 
Unemployed & White & 0.2 \\ 
Employed & Non White & 0.7 \\ 
Unemployed & Non White & 0.3 \\ 
\bottomrule \end{tabular} 

\subsection*{Work Hour by Employment Status}
\begin{tabular}{llc}
    \toprule
    \textbf{Work Hour} & \textbf{Employment Status} & \textbf{P(x|y)} \\
    \midrule
    Day shift & Employed & 0.5 \\
    Night shift & Employed & 0.3 \\
    Irregular shift & Employed & 0.18 \\
    No shift & Employed & 0.02 \\
    Day shift & Unemployed & 0.0 \\
    Night shift & Unemployed & 0.0 \\
    Irregular shift & Unemployed & 0.0 \\
    No shift & Unemployed & 1.0 \\
    \bottomrule
\end{tabular}
 
\subsection*{Number of Children by Age Group}
\centering \begin{tabular}{llc} \toprule 
\textbf{Number of Children} & \textbf{Age Group} & \textbf{P(x|y)} \\
 \midrule No child & Under 18 & 0.95 \\
 +1 child & Under 18 & 0.05 \\
 No child & 18-54 & 0.55 \\
 +1 child & 18-54 & 0.45 \\
 No child & Above 55 & 0.2 \\
 +1 child & Above 55 & 0.8 \\
 \bottomrule 
 \end{tabular}

\subsection*{Childcare Obligation by Gender and Number of Children}

\centering 
\begin{tabular}{llc} \toprule 
\textbf{Childcare Obligation} & \textbf{Gender, Number of Children} & \textbf{P(x|y,z)} \\
\midrule No obligation & Female, no child & 1.0 \\
No obligation & Female, +1 child & 0.3 \\
No obligation & Male, no child & 1.0 \\
No obligation & Male, +1 child & 0.85 \\
Have obligation & Female, no child & 0.0 \\
Have obligation & Female, +1 child & 0.7 \\
Have obligation & Male, no child & 0.0 \\
Have obligation & Male, +1 child & 0.15 \\
\bottomrule 
\end{tabular}

\subsection*{Schedule Preference by Transportation Accessibility, Work Hour, and Childcare Obligation}

\begin{table*}[ht]
\centering
\resizebox{0.6\textwidth}{!}{%
\begin{tabular}{llc} 
    \toprule
    \textbf{Schedule Preference (o) } & \textbf{Transportation Accessibility (l), Work Hour (m), Childcare Obligation (n)} & \textbf{P(o | l,m,n)} \\
    \midrule
    \multicolumn{2}{l}{\textbf{Public Transportation, Day or Regular Shift, Have or Don't Have Childcare Obligation}} & \\
    8:00AM & & 1/6 \\
    8:30AM & & 1/6 \\
    9:00AM & & 1/6 \\
    9:30AM & & 1/6 \\
    10:00AM & & 1/6 \\
    10:30AM & & 1/6 \\
    1:00PM & & 0 \\
    1:30PM & & 0 \\
    2:00PM & & 0 \\
    2:30PM & & 0 \\
    3:00PM & & 0 \\
    3:30PM & & 0 \\
    \midrule
    \multicolumn{2}{l}{\textbf{Public Transportation, Night Shift, Have or Don't Have Childcare Obligation}} & \\
    8:00AM & & 0 \\
    8:30AM & & 0 \\
    9:00AM & & 0 \\
    9:30AM & & 1/3 \\
    10:00AM & & 1/3 \\
    10:30AM & & 1/3 \\
    1:00PM & & 0 \\
    1:30PM & & 0 \\
    2:00PM & & 0 \\
    2:30PM & & 0 \\
    3:00PM & & 0 \\
    3:30PM & & 0 \\
    \midrule
    \multicolumn{2}{l}{\textbf{Private Transportation, Day Shift, Have Childcare Obligation}} & \\
    8:00AM & & 1/6 \\
    8:30AM & & 1/6 \\
    9:00AM & & 1/6 \\
    9:30AM & & 1/6 \\
    10:00AM & & 1/6 \\
    10:30AM & & 1/6 \\
    1:00PM & & 0 \\
    1:30PM & & 0 \\
    2:00PM & & 0 \\
    2:30PM & & 0 \\
    3:00PM & & 0 \\
    3:30PM & & 0 \\
    \midrule
    \multicolumn{2}{l}{\textbf{Private Transportation, Day Shift, No Childcare Obligation}} & \\
    8:00AM & & 0 \\
    8:30AM & & 0 \\
    9:00AM & & 0 \\
    9:30AM & & 0 \\
    10:00AM & & 0 \\
    10:30AM & & 0 \\
    1:00PM & & 0 \\
    1:30PM & & 0 \\
    2:00PM & & 0 \\
    2:30PM & & 1/3 \\
    3:00PM & & 1/3 \\
    3:30PM & & 1/3 \\
    \midrule
    \multicolumn{2}{l}{\textbf{Private Transportation, Night or Irregular Shift, Have or Don't Have Childcare Obligation}} & \\
    8:00AM & & 1/4 \\
    8:30AM & & 1/4 \\
    9:00AM & & 1/4 \\
    9:30AM & & 1/4 \\
    10:00AM & & 0 \\
    10:30AM & & 0 \\
    1:00PM & & 0 \\
    1:30PM & & 0 \\
    2:00PM & & 0 \\
    2:30PM & & 0 \\
    3:00PM & & 0 \\
    3:30PM & & 0 \\
    \bottomrule
\end{tabular}%
}
\caption{Schedule preferences based on transportation accessibility, work hours, and childcare obligation with corresponding probabilities \( P(o|l,m,n) \).}
\label{table:schedule_preferences}
\end{table*}

\end{document}